\theoremstyle{plain}
\newtheorem{theorem}{Theorem}
\theoremstyle{definition}
\theoremstyle{remark}
\title{BNPO: Beta Normalization Policy Optimization}
\author{%
  Changyi Xiao\textsuperscript{\rm 1}, Mengdi Zhang\textsuperscript{\rm 2}, Yixin Cao\textsuperscript{\rm 1}\thanks{Corresponding author.} \\
  \textsuperscript{\rm 1}School of Computer Science, Fudan University\\
    \textsuperscript{\rm 2}Meituan Group\\
  \texttt{\{changyi\_xiao, yxcao\}@fudan.edu.cn}\\
}
\begin{document}

\maketitle

\begin{abstract}
Recent studies, including DeepSeek-R1 and Kimi-k1.5, have demonstrated that reinforcement learning with rule-based, binary-valued reward functions can significantly enhance the reasoning capabilities of large language models. These models primarily utilize REINFORCE-based policy optimization techniques, such as REINFORCE with baseline and group relative policy optimization (GRPO). However, a key limitation remains: current policy optimization methods either neglect reward normalization or employ static normalization strategies, which fail to adapt to the dynamic nature of policy updates during training. This may result in unstable gradient estimates and hinder training stability. To address this issue, we propose Beta Normalization Policy Optimization (BNPO), a novel policy optimization method that adaptively normalizes rewards using a Beta distribution with dynamically updated parameters. BNPO aligns the normalization with the changing policy distribution, enabling more precise and lower-variance gradient estimation, which in turn promotes stable training dynamics. We provide theoretical analysis demonstrating BNPO's variance-reducing properties and show that it generalizes both REINFORCE and GRPO under binary-valued reward settings. Furthermore, we introduce an advantage decomposition mechanism to extend BNPO’s applicability to more complex reward systems. Experimental results confirm that BNPO achieves state-of-the-art performance among policy optimization methods on reasoning tasks. The code is available at \url{https://github.com/changyi7231/BNPO}.
\end{abstract}

\section{Introduction}
Kimi-K1.5 \citep{team2025kimi} and DeepSeek-R1 \citep{guo2025deepseek} have demonstrated that reinforcement learning can substantially enhance the reasoning capabilities of large language models. These models leverage reinforcement learning techniques built on rule-based, binary-valued outcome reward functions, and utilize policy optimization techniques such as REINFORCE with baseline \citep{kool2019buy} and group relative policy optimization (GRPO) \citep{shao2024deepseekmath}.

In contrast to proximal policy optimization (PPO) \citep{schulman2017proximal}, which employs a critic network to estimate the baseline for policy gradients, REINFORCE with baseline and GRPO utilize Monte Carlo sampling for baseline estimation, reducing memory and computational overhead. Specifically, REINFORCE with baseline incorporates a state-dependent baseline compared to vanilla REINFORCE to reduce gradient variance, while GRPO further stabilizes training by normalizing rewards, thereby reducing gradient variance in high-variance reward scenarios.

Despite these advances, a fundamental limitation remains: current methods either lack reward normalization entirely or use fixed normalization terms throughout training. This is suboptimal, as the policy model evolves during training, fixed normalization cannot adapt to such dynamics, potentially resulting in inaccurate gradient estimates and unstable learning.

To overcome this limitation, we propose a novel policy optimization method, called Beta Normalization Policy Optimization (BNPO), which dynamically normalizes the reward function using a Beta distribution with its adaptive parameters. By evolving alongside the policy model, this normalization mechanism provides more accurate and lower-variance gradient estimates. Besides, we introduce an advantage decomposition mechanism to enhance BNPO’s ability to handle complex reward systems.

Our approach is motivated by the observation that, under binary-valued reward functions, the reward can be seen as a random variable with Bernoulli distribution, and its expectation naturally can be modeled as a random variable with Beta distribution. As training progresses and the policy evolves, the distribution of expected rewards also shifts. BNPO explicitly accounts for these shifts by adjusting the normalization term accordingly.

We further present a theoretical analysis demonstrating that BNPO can effectively reduce the variance of policy gradient estimates when the Beta distribution parameters are appropriately set. Moreover, we show that BNPO generalizes both REINFORCE and GRPO in the binary-valued reward setting, highlighting its broad applicability and theoretical consistency. Finally, experimental results show that BNPO achieves state-of-the-art performance in policy optimization for reasoning tasks.

\begin{figure}[t]
\begin{center}
\centerline{\includegraphics[width=0.8\textwidth]{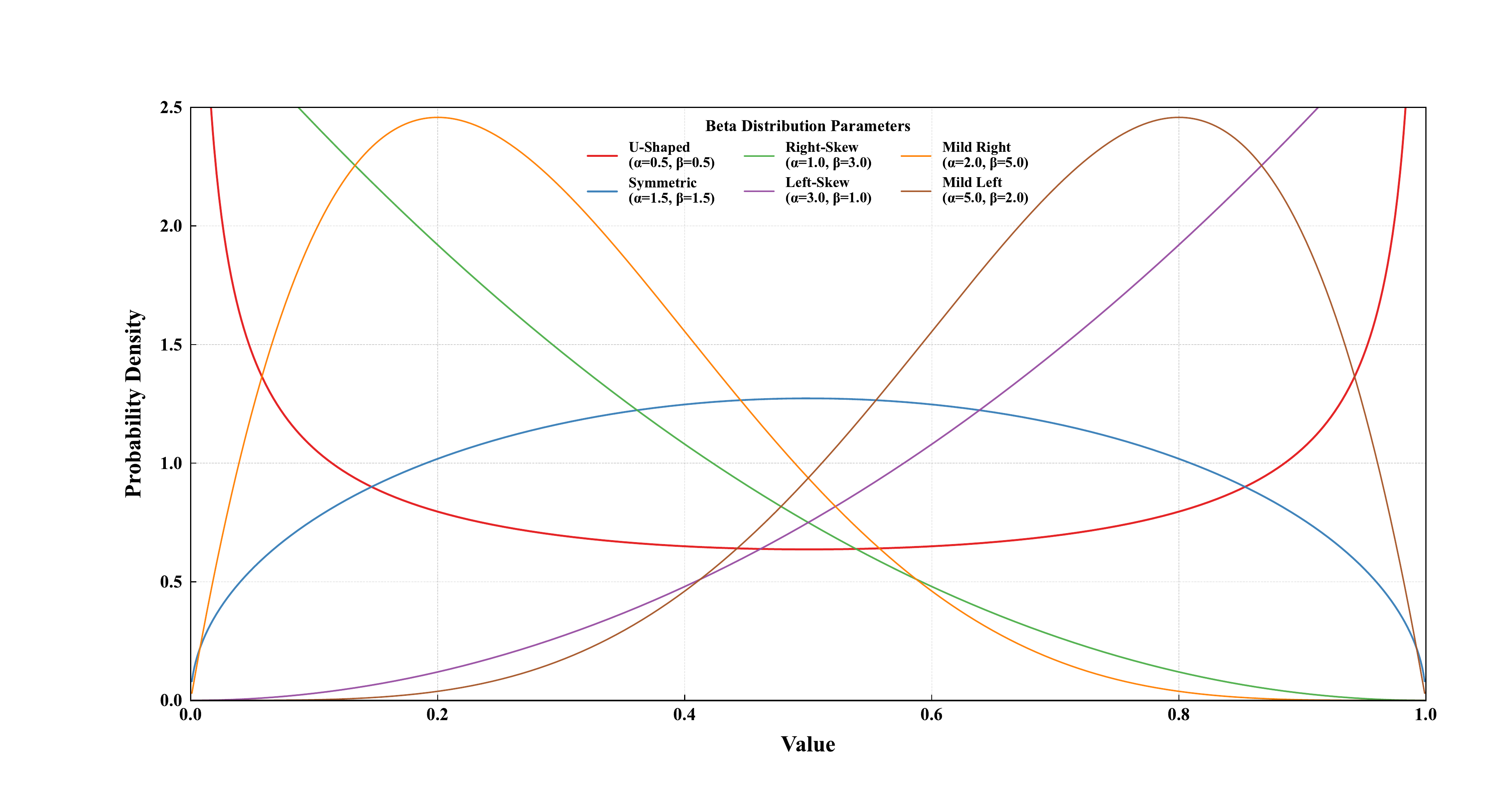}}
\caption{Probability density function of Beta distribution.}
\label{figure:1}
\end{center}
\end{figure}

\section{Background}

\subsection{Beta Distribution}
\label{section:2.1}
The Beta distribution is a continuous probability distribution defined on the interval $[0,1]$, making it particularly well-suited for modeling probabilities. In this paper, we use it to represent the distribution of the expectation of a binary-valued reward. Its probability density function is given by
\begin{equation} 
\label{equation:1}
f(p; \alpha, \beta) = \frac{1}{\mathrm{B}(\alpha,\beta)} p^{\alpha - 1} (1 - p)^{\beta - 1}, \quad p \in [0,1] \text{ or } p \in (0,1), \quad \alpha > 0, \beta > 0,
\end{equation}
where $\mathrm{B}(\cdot,\cdot)$ denotes the Beta function, which serves as a normalization constant to ensure the probability density function integrates to one. Figure \ref{figure:1} illustrates the probability density function of the Beta distribution under various parameter settings.

The shape of the Beta distribution is primarily determined by the values of $\alpha$ and $\beta$, which control the concentration of probability mass and the skewness of the distribution. When $\alpha>1$ and $\beta>1$, the distribution is unimodal and bell-shaped, with the mode $\frac{\alpha-1}{\alpha+\beta-2}$ lying between 0 and 1. If $\alpha<1$ and $\beta<1$, the distribution becomes U-shaped, with higher densities near 0 and 1. When one of the parameters is less than 1 while the other is greater than 1, the distribution becomes highly skewed, concentrating mass near one endpoint. A special case occurs when $\alpha=\beta$, resulting in a symmetric distribution centered around $p=\frac{1}{2}$. This adaptability makes the Beta distribution a popular choice in probabilistic modeling contexts.

In terms of summary statistics, the mean of the Beta distribution is given by $\mathbb{E}[p]=\frac{\alpha}{\alpha+\beta}$, reflecting the balance between the two parameters. The variance is given by $\mathrm{Var}[p]=\frac{\alpha\beta}{(\alpha+\beta)^2(\alpha+\beta+1)}$, which decreases as the sum $\alpha+\beta$ increases, indicating greater certainty or concentration around the mean. 

\subsection{Policy Optimization}
Reinforcement learning provides an effective framework for training large language models by enabling them to learn policies through interaction with the environment and feedback signals. Among various reinforcement learning methods, policy gradient techniques are particularly prominent due to their ability to scale to high-dimensional action spaces typical in language generation tasks. Given a outcome reward function $R(q,o)$, the objective function in policy gradient methods is defined as:
\begin{align}
\label{equation:2}
    \mathcal{J}(\theta)&=\mathbb{E}_{q\sim\rho,\, o\sim\pi_\theta(\cdot| q)}\Bigl[R(q,o)\Bigr],
\end{align}
where $\rho$ represents the distribution of questions $q$, and $\pi_\theta(\cdot| q)$ denotes the parameterized policy model that defines the distribution over outputs $o$. According to the policy gradient theorem \citep{sutton1999policy}, the policy gradient for the objective in Eq.(\ref{equation:2}) is given by:
\begin{align}
\label{equation:3}
\nabla_\theta \mathcal{J}(\theta)= \mathbb{E}_{q\sim\rho,\, o\sim\pi_\theta(\cdot| q)}\Bigl[\nabla_\theta \log \pi_\theta(o| q)\, R(q,o)\Bigr].
\end{align}
In practice, directly using Eq.~(\ref{equation:3}) can lead to high variance in gradient estimates \citep{barto2021reinforcement}, which negatively impacts training stability. To mitigate this, policy gradient methods typically introduce an advantage function $A(q,o)$:
\begin{align}
\label{equation:4}
\nabla_\theta \mathcal{J}(\theta)&= \mathbb{E}_{q\sim\rho,\, o\sim\pi_\theta(\cdot| q)}\Bigl[\nabla_\theta \log \pi_\theta(o| q)\, A(q,o)\Bigr],
\end{align}
where $A(q,o)$ represents the relative advantage of a question-output pair $(q,o)$ compared to other pairs. The use of $A(q,o)$ primarily serves to reduce the variance in policy gradient estimation:
\begin{align}
\mathrm{Var}_{q\sim\rho,\, o\sim\pi_\theta(\cdot| q)}\Bigl[\nabla_\theta \log \pi_\theta(o| q)\, A(q,o)\Bigr].
\end{align}
The advantage function is commonly formulated as $A(q,o)=\frac{R(q,o)-\mu}{\sigma}$, where $\mu$ serves as a baseline for $R(q,o)$ to compare and $\sigma$ acts as a normalization term. With appropriate choices of $\mu$ and$\sigma$, the estimation of policy gradient remains unbiased while its variance is reduced.

REINFORCE with baseline \citep{team2025kimi,kool2019buy} defines $A(q,o)$ as
\begin{align}
\label{equation:5}
    A(q,o)=&R(q,o) - \mathbb{E}_{o'\sim\pi_\theta(\cdot| q)}\Bigl[ R(q,o') \Bigr]\notag\\
    \approx &R(q,o) - \mathrm{Mean}(\{R(q,o'_{j})\}_{j=1}^{m}),
\end{align}
where the baseline is the mean reward over a sampled group of outputs $\{(q, o'_j)\}_{j=1}^m$. This Monte Carlo estimate approximates the expected reward $\mathbb{E}_{o'\sim\pi_\theta(\cdot| q)}\Bigl[ R(q,o') \Bigr]$ and has been shown to effectively reduce the variance of policy gradient estimates \citep{wu2018variance}.

GRPO  \citep{guo2025deepseek,shao2024deepseekmath} defines $A(q,o)$ as:
\begin{align}
\label{equation:6}
    A(q,o)
     =&\frac{R(q,o)-\mathbb{E}_{o'\sim\pi_\theta(\cdot| q)}\Bigl[ R(q,o') \Bigr]}{\sqrt{\mathrm{Var}_{o'\sim\pi_\theta(\cdot| q)}[R(q,o')]}}\notag\\
    \approx &\frac{R(q,o) - \mathrm{Mean}(\{R(q,o'_{j})\}_{j=1}^{m})}{\sqrt{\mathrm{Var}(\{R(q,o'_{j})\}_{j=1}^{m})}},
\end{align}
Compared to REINFORCE with baseline, GRPO further uses the standard deviation of the rewards over the sampled set $\{(q, o'_j)\}_{j=1}^m$ to normalize the reward function. This normalization term can further reduce the variance in estimating policy gradient for high-variance reward functions.

PPO \citep{schulman2017proximal} further enhances stability by incorporating importance sampling and a clipping mechanism for off-policy updates:
\begin{align}
\label{equation:7}
    &\mathcal{J}(\theta)=\mathbb{E}_{q\sim \rho,o\sim \pi_{\theta_{\text{old}}}(o|q)}\Bigl[\min (\frac{\pi_{\theta}(o|q)}{\pi_{\theta_{\text{old}}(o|q)}}A(q,o),\text{clip}(\frac{\pi_{\theta}(o|q)}{\pi_{\theta_{\text{old}}}(o|q)},1-\varepsilon,1+\varepsilon)A(q,o))\Bigl],
\end{align}
where $\pi_{\theta_{old}}$ is the old policy and $\varepsilon$ is a hyperparameter that controls the range of clipping.

\section{Beta Normalization Policy Optimization}
In this section, we introduce our policy optimization method, BNPO, which employs a Beta distribution to normalize binary-valued reward functions. BNPO adapts to the evolving policy model during training by dynamically adjusting the parameters of the Beta distribution. We then provide a theoretical proof demonstrating that BNPO effectively reduces the variance of policy gradient estimates. Furthermore, we show that BNPO generalizes both REINFORCE with baseline and GRPO in the context of binary-valued rewards. Finally, we present an advantage decomposition mechanism to extend BNPO's applicability to more complex reward systems.
\paragraph{Beta Normalization}
We use the accuracy of an output $o$ with respect to a question $q$ as the reward function $R(q,o)$ as in DeepSeek-R1 \citep{guo2025deepseek}, i.e.,
\begin{align}
\label{equation:8}
    R(q,o)=
    \begin{cases}
    1, & \text{if } o \text{ contains the answer } a \text{ of the question }q,\\
    0, & \text{otherwise.}
    \end{cases}
\end{align}
Since the value of $R(q,o)$ is either 0 or 1, $R(q,o)$ can be treated as a random variable with Bernoulli distribution, i.e. 
\begin{align}
\label{equation:10}
&R(q,o)\sim \text{Bernoulli }(p(q)), \quad 0\leq p(q)\leq 1,\notag\\
&p(q)=\mathbb{E}_{o\sim\pi_\theta(\cdot| q)}[R(q, o) | q],
\end{align}
where $p(q)$ denotes the probability that output $o$ is correct for question $q$, and it is also the expected reward under the distribution $\pi_\theta(\cdot| q)$. As mentioned in Section \ref{section:2.1}, the Beta distribution is very suitable for modeling probability. Thus, we model $p(q)$ as a random variable with Beta distribution $f_{D}(p(q);a,b)$, where the parameters $a$ and $b$ control the shape of the distribution. These parameters can be estimated using Monte Carlo sampling.

As the policy model $\pi_\theta(\cdot| q)$ evolves during training, the distribution of $p(q)$ also changes dynamically. To account for these changes, we propose using an additional Beta distribution $f_{N}(p(q);\alpha,\beta)$ to normalize the reward function. The advantage function in our BNPO method is defined as:
\begin{align}
\label{equation:11}
    A_{\alpha, \beta}(q,o)= \frac{R(q,o)-p(q)}{f_{N}(p(q);\alpha,\beta)},
\end{align}
where $p(q)$ serves as the baseline, as in REINFORCE with baseline and GRPO, and $f_{N}(p(q);\alpha,\beta)$ is used to normalize the reward function $R(q,o)$.

\paragraph{The setting of $\alpha$ and $\beta$}
We dynamically adjust the parameters $(\alpha,\beta)$ in $f_{N}(p(q);\alpha,\beta)$ to ensure that $A_{\alpha, \beta}(q,o)$ adapts to the evolving distribution $f_{D}(p(q);a,b)$ during training. The primary goal in setting $\alpha$ and $\beta$ is to minimize the variance in policy gradient estimation. We present the following theorem to achieve it.
\begin{theorem}
\label{theorem:1}
Let \( q \sim \rho \) be a question and \( o \sim \pi_\theta(\cdot | q) \) be an output with reward \( R(q, o) \in \{0, 1\} \), where $R(q,o)$ follows a Bernoulli distribution with success probability \( p(q) = \mathbb{E}_{o\sim\pi_\theta(\cdot| q)}[R(q, o) | q] \), and that $p(q)$ follows a Beta distribution $f_{D}(p(q);a,b) $. Define the BNPO gradient estimator as
\[
g_{\alpha,\beta} = \nabla_\theta \log \pi(o | q) \frac{R(q,o)-p(q)}{f_{N}(p(q);\alpha,\beta)}.
\]
where $f_{N}(p(q);\alpha,\beta)$ is a Beta distribution. Under the assumption $\nabla_\theta \log \pi(o | q)$ is uncorrelated with $\frac{R(q,o)-p(q)}{f_{N}(p(q);\alpha,\beta)}$, the variance of the policy gradient estimator $\mathrm{Var}_{q\sim\rho,\, o\sim\pi_\theta(\cdot| q)}(g_{\alpha,\beta})$ is finite if and only if: \( \alpha < \frac{a + 3}{2} \) and \( \beta < \frac{b + 3}{2} \). Within this domain, $\mathrm{Var}_{q\sim\rho,\, o\sim\pi_\theta(\cdot| q)}(g_{\alpha,\beta})$ attains a unique minimum at:
\[
\alpha = 1 + \frac{a}{3}, \quad \beta = 1 + \frac{b}{3}.
\]
\end{theorem}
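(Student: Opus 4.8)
The plan is to reduce the vector-valued variance to a one-dimensional optimization over a Beta-function integral, then locate and certify the minimizer. First I would note that conditioning on $q$ gives $\mathbb{E}_{o\sim\pi_\theta(\cdot\mid q)}[R(q,o)-p(q)]=0$, since $p(q)$ is by definition the conditional expected reward; hence the advantage, and therefore $g_{\alpha,\beta}$, has mean zero, so $\mathrm{Var}(g_{\alpha,\beta})=\mathbb{E}[\lVert g_{\alpha,\beta}\rVert^2]$. Invoking the stated uncorrelatedness of the score $\nabla_\theta\log\pi(o\mid q)$ with the normalized advantage, the variance factors into a $(\alpha,\beta)$-independent term times the second moment of the advantage,
\[
M(\alpha,\beta)=\mathbb{E}_{q,o}\!\left[\frac{(R(q,o)-p(q))^2}{f_N(p(q);\alpha,\beta)^2}\right],
\]
so minimizing the variance over $(\alpha,\beta)$ is equivalent to minimizing $M$.

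Next I would evaluate $M$ in closed form. Conditioning on $q$ (equivalently on $p=p(q)$), the inner expectation over $o$ is the Bernoulli variance $\mathbb{E}_o[(R-p)^2]=p(1-p)$. Substituting the two Beta densities and integrating over $p\sim f_D(\cdot;a,b)$, all powers of $p$ and $1-p$ combine and the integral collapses to a single Beta function:
\[
M(\alpha,\beta)=\frac{B(\alpha,\beta)^2}{B(a,b)}\int_0^1 p^{(a+3-2\alpha)-1}(1-p)^{(b+3-2\beta)-1}\,dp=\frac{B(\alpha,\beta)^2}{B(a,b)}\,B(a+3-2\alpha,\,b+3-2\beta).
\]
The Beta integral converges exactly when both exponents are positive, i.e. $a+3-2\alpha>0$ and $b+3-2\beta>0$; since $B(\alpha,\beta)$ is finite for all $\alpha,\beta>0$, this delivers the finiteness criterion $\alpha<\tfrac{a+3}{2}$, $\beta<\tfrac{b+3}{2}$ as an exact if-and-only-if.

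To find the minimizer I would work with $\log M$. Dropping the additive constant $-\log B(a,b)$ and writing everything through $\log\Gamma$, differentiation with the digamma function $\psi$ yields the stationarity condition
\[
\psi(\alpha)-\psi(a+3-2\alpha)-\psi(\alpha+\beta)+\psi(a+b+6-2\alpha-2\beta)=0
\]
together with its $\beta$-analogue. The candidate $\alpha^\star=1+\tfrac{a}{3}$, $\beta^\star=1+\tfrac{b}{3}$ obeys the key identity $a+3-2\alpha^\star=\alpha^\star$ (and likewise $b+3-2\beta^\star=\beta^\star$), whence $a+b+6-2\alpha^\star-2\beta^\star=\alpha^\star+\beta^\star$; substituting, every digamma term cancels in pairs and both equations hold, so $(\alpha^\star,\beta^\star)$ is a critical point.

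Finally, for uniqueness and to confirm a minimum, I would establish strict convexity of $\log M$. Writing $\log M=2\log B(\alpha,\beta)+\log B(a+3-2\alpha,b+3-2\beta)+\text{const}$ exhibits it as $\log B$ plus $\log B$ precomposed with an affine map, so it suffices that $G(x,y)=\log\Gamma(x)+\log\Gamma(y)-\log\Gamma(x+y)$ be strictly convex. Its Hessian has diagonal entries $\psi'(x)-\psi'(x+y)>0$ (trigamma is strictly decreasing) and determinant $\psi'(x)\psi'(y)-\psi'(x+y)(\psi'(x)+\psi'(y))$, so positive-definiteness is equivalent to $\tfrac{1}{\psi'(x+y)}>\tfrac{1}{\psi'(x)}+\tfrac{1}{\psi'(y)}$, i.e. superadditivity of $\phi=1/\psi'$. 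I expect this special-function inequality to be the main obstacle: I would prove it by showing $\phi$ is convex on $(0,\infty)$ with $\lim_{z\to0^+}\phi(z)=0$ (as $\psi'(z)\to\infty$), since a convex function vanishing at the origin is superadditive — comparing $\phi(x)$ and $\phi(y)$ to the chords through $(0,\phi(0))$ and $(x+y,\phi(x+y))$ and summing gives $\phi(x)+\phi(y)\le\phi(x+y)$. Strict convexity of $\log M$ then forces $(\alpha^\star,\beta^\star)$ to be the unique global minimizer on the admissible domain.
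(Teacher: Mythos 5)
Your proposal is correct, and its computational skeleton coincides with the paper's: the same reduction of the variance to the second moment of the normalized advantage via the zero-mean and uncorrelatedness arguments, the same closed form $B(\alpha,\beta)^2\,B(a+3-2\alpha,\,b+3-2\beta)/B(a,b)$, the same if-and-only-if finiteness domain from positivity of the Beta-function arguments, and the same verification of the critical point through the cancellation identity $a+3-2\alpha^\star=\alpha^\star$. The one genuine difference is the certification of uniqueness. The paper computes the Hessian of $\ln\mathrm{Var}$ only \emph{at} the candidate point $(\alpha^\star,\beta^\star)$, shows it is positive definite there using the superadditivity inequality $1/\psi_1(x+y)>1/\psi_1(x)+1/\psi_1(y)$, and then asserts uniqueness because the domain is convex --- which, strictly speaking, only establishes a local minimum. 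You instead prove \emph{global} strict convexity of $\log M$ by decomposing it as $2\log B(\alpha,\beta)+\log B(a+3-2\alpha,b+3-2\beta)+\text{const}$, observing that each term is $G(x,y)=\log\Gamma(x)+\log\Gamma(y)-\log\Gamma(x+y)$ precomposed with an invertible affine map, and checking that $G$'s Hessian is positive definite at \emph{every} point of $(0,\infty)^2$ via the same superadditivity inequality. This is a stronger conclusion that makes the uniqueness claim airtight (an interior stationary point of a strictly convex function on a convex domain is automatically the unique global minimizer, with no separate boundary-behavior argument needed), and it actually repairs the small logical gap in the paper's final step. Note that both you and the paper invoke without proof the strict convexity of $1/\psi_1$ on $(0,\infty)$ together with its vanishing limit at $0^+$; this is a known special-function fact, so your proposal is no less rigorous than the paper on that point, but if you want a self-contained write-up you should cite or prove that convexity claim.
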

See Appendix \ref{appendix:1} for the proof. The above theorem demonstrates that the optimal parameter settings for minimizing the variance of the policy gradient are $\alpha = 1 + \frac{a}{3}$ and $\beta = 1 + \frac{b}{3}$. Thus, the choice of $(\alpha, \beta)$ depends on the values of $(a, b)$. We estimate $(a, b)$ using the method-of-moments approach and Monte Carlo sampling.

Given the following relationships:
\begin{align}
    \mathbb{E}[p(q)]&=\frac{a}{a+b},\notag\\
    \mathrm{Var}[p(q)]&=\frac{ab}{(a+b)^2(a+b+1)},
\end{align}
we can solve for $a$ and $b$ as:
\begin{align}
\label{equation:13}
&a = \Bigl(\frac{ \mathbb{E}[p(q)](1- \mathbb{E}[p(q)])}{\mathrm{Var}[p(q)]}-1\Bigr)\, \mathbb{E}[p(q)],\notag\\
&b = \Bigl(\frac{ \mathbb{E}[p(q)](1- \mathbb{E}[p(q)])}{\mathrm{Var}[p(q)]}-1\Bigr)\,(1- \mathbb{E}[p(q)]).
\end{align}
We then estimate $\mathbb{E}[p(q)]$ and $\mathrm{Var}[p(q)]$ using Monte Carlo methods to get $a$ and $b$:
\begin{align}
\label{equation:14}
    &\mathbb{E}[p(q)]\approx \text{Mean}(\{p(q_{i})\}_{i=1}^{n}),\quad \notag\\
&\mathrm{Var}[p(q)]\approx \text{Var}(\{p(q_{i})\}_{i=1}^{n}).
\end{align}

\paragraph{The interpretation of $\alpha$ and $\beta$}
The parameters $\alpha$ and $\beta$ can be understood in terms of the mean and variance of $f_{D}(p(q);a,b)$. The mean of $f_{D}(p(q);a,b)$ is given by $\frac{a}{a + b}$, representing the average reward of all $(q,o)$ pairs. The mode of $f_{N}(p(q);\alpha,\beta)$ is $\frac{\alpha - 1}{\alpha + \beta - 2} = \frac{a}{a + b}$, which corresponds to the value at which $f_{N}(p(q);\alpha,\beta)$ attains its maximum. This shows that the mean of $f_{D}(p(q);a,b)$ is equal to the mode of $f_{N}(p(q);\alpha,\beta)$. As a result, the reward function $R(q, o)$ is most normalized at the average reward $\frac{a}{a + b}$.

The variance of $f_{D}(p(q);a,b)$ decreases/increases as the sum $a+b$ increases/decreases. Since $\alpha + \beta = 2 + \frac{a + b}{3}$, the variance of $f_{N}(p(q);\alpha,\beta)$ behaves similarly: it decreases/increases as $a + b$ increases/decreases. Hence, $f_{N}(p(q);\alpha,\beta)$ adapts its parameters to align with the variance changes of $f_{D}(p(q);a,b)$.

\paragraph{REINFORCE and GRPO}
We now demonstrate that BNPO generalizes both REINFORCE with baseline and GRPO under binary-valued reward circumstances, reducing to each of these methods under specific settings for $\alpha$ and $\beta$.

REINFORCE with baseline defines the advantage function $A(q,o)$ as
\begin{align}
A(q,o)&= R(q,o)-\mathbb{E}_{o'\sim\pi_\theta(\cdot| q)}\Bigl[ R(q,o') \Bigr]\notag\\
&=R(q,o)-p(q)\notag\\
&=\frac{R(q,o)-p(q)}{f_{N}(p(q);1,1)}\notag\\
&=A_{1, 1}(q,o).
\end{align}
Therefore, BNPO reduces to REINFORCE if $f_{N}(p(q);\alpha,\beta)=f_{N}(p(q);1,1)$. Since RLOO \citep{kool2019buy,ahmadian2024back} is equivalent to REINFROCE with baseline up to a scaling constant \citep{liu2025understanding}, BNPO can also reduce to RLOO.

GRPO defines the advantage function $A(q,o)$ as
\begin{align}
    A(q,o)&= \frac{R(q,o)-\mathbb{E}_{o'\sim\pi_\theta(\cdot| q)}\Bigl[ R(q,o') \Bigr]}{\sqrt{\mathrm{Var}_{o'\sim\pi_\theta(\cdot| q)}[R(q,o')]}}\notag\\
    &=\frac{R(q,o)-p(q)}{\sqrt{p(q)(1-p(q))}}\notag\\
    &\propto \frac{R(q,o)-p(q)}{f_{N}(p(q);\frac{3}{2},\frac{3}{2})}\notag\\
    &=A_{\frac{3}{2},\frac{3}{2}}(q,o).
\end{align}
In training large language models, gradient clipping is commonly employed. Consequently, scaling the loss function by a constant does not affect the parameter update process. Consequently, BNPO reduces to GRPO if $f_{N}(p(q);\alpha,\beta)=f_{N}(p(q);\frac{3}{2},\frac{3}{2})$.

REINFORCE with a baseline and GRPO can be viewed as special cases of BNPO with fixed values of $(\alpha, \beta)$. In contrast, BNPO dynamically adjusts $(\alpha, \beta)$ during training to better align with the evolving policy model.

\paragraph{Advantage Decomposition}
To extend our method to more complex reward systems beyond a single binary reward function, we introduce an advantage decomposition mechanism. This approach enables the separate normalization of each individual reward component, leading to a more accurate estimation of the overall advantage function. Such decomposition is particularly beneficial in settings with multiple reward signals. For example, DeepSeek-R1 employs both format and accuracy rewards to ensure that model outputs not only follow the required structure but also produce correct answers.

Given $K$ binary-valued reward functions $\{R^{(1)}(q,o),R^{(2)}(q,o),\cdots, R^{(K)}(q,o)\}$, we decompose the overall advantage function $A(q,o)$ into $K$ sub-advantage functions $A^{(i)}(q,o)$ as follows:
\begin{align}
\label{equation:17}
    A(q,o)&=\frac{1}{K}\sum_{i=1}^{K}A^{(i)}(q,o)= \frac{1}{K}\sum_{i=1}^{K}\frac{R^{(i)}(q,o)-p(q)^{(i)}}{f_{N}(p(q)^{(i)};\alpha^{(i)},\beta^{(i)})}.
\end{align}
where each sub-advantage function $A^{(i)}(q,o)$ is computed for the corresponding reward function $R^{(i)}(q,o)$.

Unlike previous methods that first sum multiple reward functions and then compute the final advantage function, our approach calculates the advantage function for each individual reward function first, and then averages them to obtain the final advantage function. The key benefit of this approach is that it allows for separate normalization of each reward function, ensuring that the normalization of one function does not interfere with others.

We present the detailed implementation of our BNPO in Alg.(\ref{alg:1}). 

\begin{algorithm}[t]
    \caption{BNPO: Beta Normalization Policy Optimization}
    \label{alg:1}
    \begin{algorithmic}[1]
        \REQUIRE 
        Initial policy model $\pi_{\theta_{0}}$, $K$ binary-valued Reward model $R^{(i)}(q,o)$, training set $\mathcal{D}$,\\ number of steps $S$, number of PPO iterations $T$, batch size $n$, number of outputs $m$.
        \STATE Initialize policy model $\pi_\theta \gets \pi_{\theta_{0}}$.
        \FOR{$\text{step}=1$ \textbf{to} $S$}
            \STATE Update the old policy model $\pi_{\theta_{\text{old}}} \gets \pi_\theta$.
            \STATE Sample $n$ questions $q$ from $\mathcal{D}$.
            \STATE Sample $m$ outputs $o \sim \pi_{\theta_{\text{old}}}(\cdot | q)$ for each question $q$.
            \FOR{each question-output pair $(q,o)$}
            \FOR{$i=1$ \textbf{to} $K$}
                \STATE Compute the reward $R^{(i)}(q,o)$.
            \ENDFOR
            \ENDFOR
            \FOR{each question $q$}
            \STATE Estimate $p(q)$ in Eq.(\ref{equation:10}).
            \ENDFOR
            \STATE Estimate the parameters $a$ and $b$ in $f_{D}(p(q);a,b)$ by Eq.(\ref{equation:13}) and Eq.(\ref{equation:14}).
            \STATE Set the the parameters $\alpha$ and $\beta$  in $f_{N}(p(q);\alpha,\beta)$ as $\alpha=1+\frac{a}{3}$ and $\beta=1+\frac{b}{3}$.
            \FOR{each question-output pair $(q,o)$}
                \STATE Compute the advantage $A(q,o)$ by Eq.(\ref{equation:11}) and Eq.(\ref{equation:17}).
            \ENDFOR
            \FOR{$\text{iteration}=1$ \textbf{to} $T$}
                \STATE Update $\pi_\theta$ by maximizing Eq.(\ref{equation:7}).
            \ENDFOR
        \ENDFOR
        \ENSURE Optimized policy model $\pi_\theta$.
    \end{algorithmic}
\end{algorithm}

\section{Related Work}
Reinforcement learning has been widely adopted to align large language models with human preferences, as seen in systems like ChatGPT and DeepSeek-R1. ChatGPT \citep{ouyang2022training} employs PPO for policy optimization, which relies on a critic network to better estimate policy gradients. However, training a critic network is computationally intensive and memory-demanding, particularly for large language models. To address this, models such as DeepSeek-R1 and Qwen \citep{yang2024qwen2} adopt REINFORCE-based methods, which avoid the need for a critic network.

The original REINFORCE algorithm \citep{williams1992simple} estimates gradients through Monte Carlo sampling but often suffers from high variance, which can hinder learning stability and efficiency. To mitigate this issue, RLOO \citep{kool2019buy} introduces a baseline function that uses the mean reward of a group of samples as a reference, significantly reducing gradient variance, especially when batch sizes are small. ReMax \citep{li2024remax} builds on this idea by employing greedy decoding to obtain a baseline. GRPO \citep{shao2024deepseekmath} further refines this idea by normalizing each reward using the standard deviation of the group, reducing variance even more. REINFORCE++ \citep{hu2025reinforce++} goes a step further by leveraging the rewards of all samples to estimate the policy gradient, resulting in more stable and robust learning performance. However, these methods either lack proper normalization or rely on static normalization strategies, which are insufficient for adapting to the evolving nature of policy during training. In contrast, BNPO dynamically adjusts its normalization parameters in response to changes in the policy, effectively stabilizing training.

\section{Experiments}
In this section, we first describe the experimental setup in Section \ref{section:5.1}, followed by the presentation of results in Section \ref{section:5.2}. We then analyze training stability in Section \ref{section:5.3}. We finally show the evolution of the normalization of BNPO in Section \ref{section:5.4}.
\subsection{Experimental Settings}
\label{section:5.1}
\paragraph{Models}
To evaluate the effectiveness of BNPO, we conduct experiments on two publicly available base models of different scales: Qwen2.5-Math-1.5B and Qwen2.5-Math-7B \citep{yang2024qwen2, yang2024qwen2_math}.

\paragraph{Methods}
We compare BNPO method against several policy optimization methods, including REINFORCE, ReMax, GRPO, and REINFORCE++. Since RLOO is equivalent to REINFORCE with a baseline, we only report the results for REINFORCE with baseline.

\paragraph{Datasets}
For training, we utilize the full MATH dataset \citep{hendrycks2021measuring}, which consists of 7,500 diverse mathematical problems spanning a wide range of topics and difficulty levels.

For evaluation, we use four benchmark datasets: MATH500 \citep{hendrycks2021measuring,lightman2023let}, AMC23 \citep{aops_amc}, AIME2024 and AIME2025 \citep{aops_aime}. 

\paragraph{Metrics}
We use pass@1 as the evaluation metric. For the AMC23, AIME 2024, and AIME 2025 datasets, we run the test set 16 times and report the average results, as these test sets are relatively small. We select the hyperparameters based on the best average performance.

\paragraph{Hyperparameters}
For all methods, we set the batch size 32, the number of outputs to 16, the number of PPO iterations to 1, the number of epochs to 5, and the learning rate to $10^{-6}$. The temperature is set to 1.0 during training and 0.6 during evaluation. We use the chat template of Qwen2.5-Math-7B and set the maximum question length to 1024 and the maximum output length to 3072, corresponding to the maximum context length of 4096 for Qwen-Math-1.5B and Qwen-Math-7B. We run all experiments on NVIDIA H20 GPUs.

\begin{table}[t]
    \centering
    \caption{The performance of different policy optimization methods on math datasets.}
    \label{table:1}
    \begin{tabular}{lcccccc}
        \toprule
        \textbf{Methods} & \textbf{MATH500} & \textbf{AMC23} & \textbf{AIME2024} & \textbf{AIME2025} & \textbf{Average}\\
        \midrule
        \multicolumn{6}{c}{\textit{Qwen2.5‑Math‑1.5B}} \\
        \midrule
        Base                                    & 28.0 & 27.3 & 6.0  & 3.1  &16.1 \\
        REINFORCE                               & 74.8 & 51.6 & 18.3  & 11.3  &39.0 \\
        ReMax                                   & 73.4 & \textbf{54.8} & 16.3  & 9.2  &38.4 \\
        GRPO                                    & \textbf{75.2} & 52.3 & \textbf{19.0}  & 9.4  &39.0 \\
        REINFORCE++                             & 72.8 & 54.1 & 16.9  & 9.4  &38.3\\
        BNPO                                    & 73.4 &54.5 & 18.3  & \textbf{11.5}  &\textbf{39.4} \\
        \midrule
        \multicolumn{6}{c}{\textit{Qwen2.5‑Math‑7B}}  \\
        \midrule
        Base                                    & 41.4 & 32.5 & 11.0  & 5.0   &22.5 \\
        REINFORCE                               & 78.4 & 61.7 & \textbf{34.2}  & 14.6   &47.2 \\
        ReMax                                   & 78.2 & 62.8 & 33.5 & \textbf{15.8}   &47.6 \\
        GRPO                                    & \textbf{78.6} & 64.5 & 32.3  & 12.9   &47.1 \\
        REINFORCE++                             & \textbf{78.6} & 64.4 & 32.1  & 12.3   &46.8 \\
        BNPO                                    & 77.0 & \textbf{68.8} & 32.1 & 13.3  &\textbf{47.8} \\
        \bottomrule
    \end{tabular}
\end{table}

\subsection{Results}
\label{section:5.2}
As shown in Table \ref{table:1}, BNPO achieves the highest average performance among all policy optimization methods for both the Qwen2.5-Math-1.5B and Qwen2.5-Math-7B base models, demonstrating its effectiveness and versatility. Notably, BNPO trained on Qwen2.5-Math-7B delivers significant improvements on the AMC23 dataset. In contrast, REINFORCE, GRPO, and REINFORCE++, which either lack normalization or rely on static normalization, exhibit suboptimal performance. Although ReMax achieves performance comparable to BNPO on the Qwen2.5-Math-7B model, it requires additional sampling to dynamically estimate the baseline, resulting in approximately 25\% longer training times in our experiments.

\subsection{Training Stability}
\label{section:5.3}
We have demonstrated in Theorem \ref{theorem:1} that BNPO effectively reduces gradient variance, thereby enhancing training stability. Given the substantial computational cost of training large language models, maintaining stable training dynamics is crucial. To evaluate this, we use the gradient norm, an indicator of policy variance, as a proxy for training stability.

As shown in Figure \ref{figure:2}, BNPO exhibits the highest stability among the methods, with consistently stable gradient norms throughout training. In contrast, GRPO, REINFORCE, and REINFORCE++ show significant fluctuations, indicating less stable training. These results highlight the benefit of BNPO’s dynamic normalization mechanism over the static normalization used in GRPO and REINFORCE++. While ReMax achieves good stability, it incurs greater computational overhead.

\subsection{Evolution of Normalization}
\label{section:5.4}
Our BNPO method dynamically adjusts the parameters \((\alpha, \beta)\) in the normalization \(f_{N}(p(q); \alpha, \beta)\) to ensure that the advantage function \(A_{\alpha, \beta}(q, o)\) remains aligned with the evolving expected reward distribution \(f_{D}(p(q); a, b)\) throughout training. We have further provided an interpretation of \(\alpha\) and \(\beta\) in terms of the mean and variance of \(f_{D}(p(q); a, b)\), demonstrating how they can be related to \(\mathbb{E}[p(q)]\) and \(\mathrm{Var}[p(q)]\).

To illustrate this relationship, Figure~\ref{figure:3} presents the evolution of \((\mathbb{E}[p(q)], \mathrm{Var}[p(q)], \alpha, \beta)\) over the course of training. As shown, the parameters \(\alpha\) and \(\beta\) effectively adapt in response to changes in the mean and variance of \(p(q)\), validating the adaptive capability of BNPO's normalization mechanism.

\begin{figure}[t]
\begin{center}
\centerline{\includegraphics[width=0.6\textwidth]{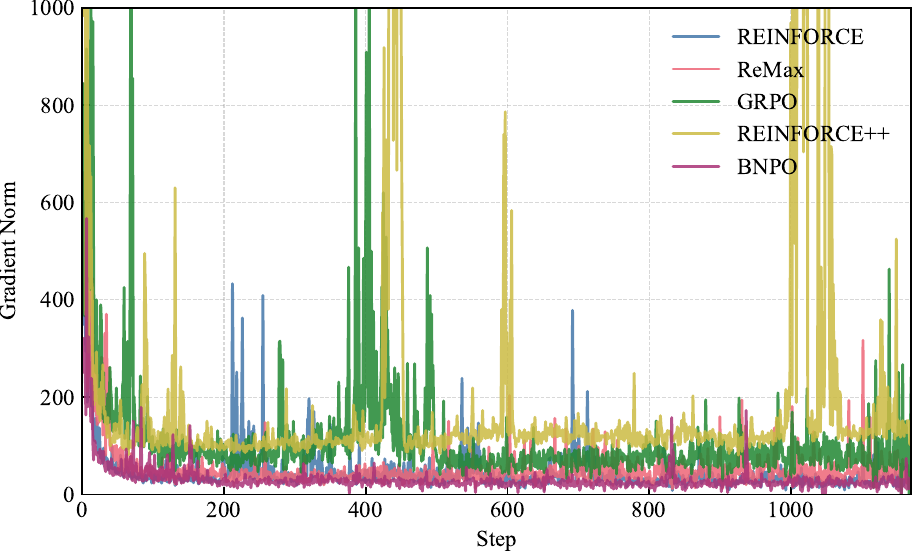}}
\caption{The norm of gradient during training.}
\label{figure:2}
\end{center}
\end{figure}

\begin{figure}[h]
\begin{center}
\centerline{\includegraphics[width=0.6\textwidth]{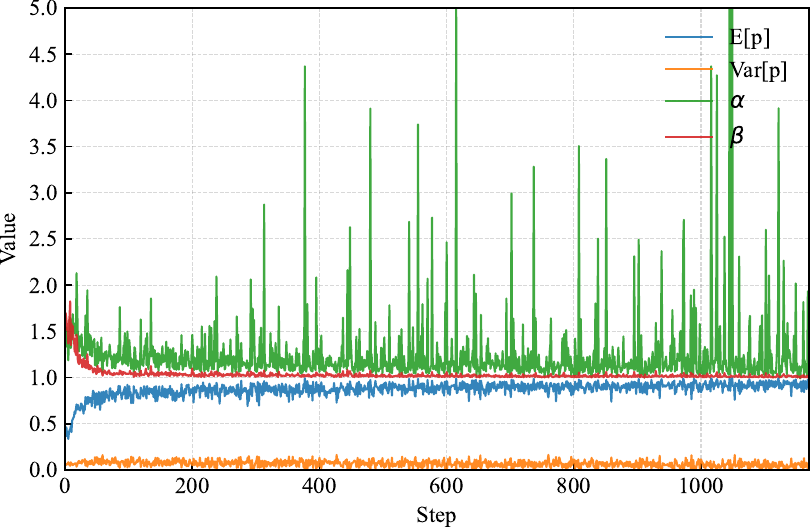}}
\caption{The values of $(\mathbb{E}[p(q)],\mathrm{Var}[p(q)], \alpha,\beta)$ during training.}
\label{figure:3}
\end{center}
\end{figure}

\section{Limitations}
\paragraph{Multi-valued or Continuous Reward Functions}
Our BNPO is designed for binary-valued reward functions. To extend it to multi-valued rewards, one potential approach is to decompose the original reward into a set of binary-valued components. Another possibility is to employ the Dirichlet distribution as a normalization term, which naturally generalizes the Beta distribution to multi-valued settings. For continuous reward functions, the appropriate normalization method depends heavily on the specific form of the reward, making it challenging to develop a universal solution. Designing a general normalization method for continuous rewards remains a direction for future work.

\paragraph{Extension of Theorem \ref{theorem:1}}
Theorem \ref{theorem:1} is established under the assumption of binary-valued rewards, which limits its general applicability. However, as discussed in the interpretation of the parameters $\alpha$ and $\beta$, their values can be related to the mean and variance of $p(q)$. This suggests a promising direction: developing theoretical frameworks based on assumptions about the statistical properties (e.g., mean and variance) of $p(q)$, rather than restricting to binary rewards. Extending the theory in this way is a area of ongoing exploration.

\section{Conclusion}
In this paper, we propose a new policy optimization methods, BNPO, which use Beta distribution to normalize the reward function. We find that the expectation of a binary-valued reward function can be treated as a random variable with Beta distribution, thus, we use another Beta distribution as the normalize term. BNPO can adaptively adjust its parameters in normalization term to match with the evolution of distribution of the expected reward. We theoretically prove that BNPO can effectively reduce the variance in estimating policy gradient. We also that BNPO can reduces to REINFORCE with baseline and GRPO under binary-valued reward circumstance. In order to account for more complex reward systems, we further propose a advantage decomposition mechanism to make BNPO more applicable. Finally, we conduct extensive experiments to verify the effectiveness of our BNPO.

\bibliographystyle{plainnat}
\bibliography{cite.bib}


\newpage

\appendix

\appendix
\setcounter{theorem}{0}

\section{Proof}
\label{appendix:1}

\begin{theorem}
Let \( q \sim \rho \) be a question and \( o \sim \pi_\theta(\cdot | q) \) be an output with reward \( R(q, o) \in \{0, 1\} \), where $R(q,o)$ follows a Bernoulli distribution with success probability \( p(q) = \mathbb{E}_{o\sim\pi_\theta(\cdot| q)}[R(q, o) | q] \), and that $p(q)$ follows a Beta distribution $f_{D}(p(q);a,b) $. Define the BNPO gradient estimator as
\[
g_{\alpha,\beta} = \nabla_\theta \log \pi(o | q) \frac{R(q,o)-p(q)}{f_{N}(p(q);\alpha,\beta)}.
\]
where $f_{N}(p(q);\alpha,\beta)$ is a Beta distribution. Under the assumption $\nabla_\theta \log \pi(o | q)$ is uncorrelated with $\frac{R(q,o)-p(q)}{f_{N}(p(q);\alpha,\beta)}$, the variance of the policy gradient estimator $\mathrm{Var}_{q\sim\rho,\, o\sim\pi_\theta(\cdot| q)}(g_{\alpha,\beta})$ is finite if and only if: \( \alpha < \frac{a + 3}{2} \) and \( \beta < \frac{b + 3}{2} \). Within this domain, $\mathrm{Var}_{q\sim\rho,\, o\sim\pi_\theta(\cdot| q)}(g_{\alpha,\beta})$ attains a unique minimum at:
\[
\alpha = 1 + \frac{a}{3}, \quad \beta = 1 + \frac{b}{3}.
\]
\end{theorem}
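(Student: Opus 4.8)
The plan is to strip away everything in the variance that does not depend on $(\alpha,\beta)$, reduce the problem to a single Beta-type integral against $f_D$, and then minimize a closed-form objective built from Beta functions that turns out to be log-convex. First I would write $g_{\alpha,\beta}=G\,W$ with $G=\nabla_\theta\log\pi(o|q)$ and $W=\frac{R(q,o)-p(q)}{f_N(p(q);\alpha,\beta)}$. Conditioning on $q$ and using $p(q)=\mathbb{E}[R(q,o)\mid q]$ gives $\mathbb{E}[W\mid q]=0$, hence $\mathbb{E}[W]=0$; the stated uncorrelatedness of $G$ and $W$ then forces $\mathbb{E}[g_{\alpha,\beta}]=0$ and lets me factor the second moment as $\mathbb{E}\|g_{\alpha,\beta}\|^2=\mathbb{E}\|G\|^2\,\mathbb{E}[W^2]$. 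Since $\mathbb{E}\|G\|^2$ is a positive constant independent of $(\alpha,\beta)$, both the finiteness and the minimization of $\mathrm{Var}(g_{\alpha,\beta})$ reduce to those of $\mathbb{E}[W^2]$.

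Next I would evaluate $\mathbb{E}[W^2]$. Conditioning on $q$ (equivalently on $p=p(q)\sim f_D$) and using the Bernoulli identity $\mathbb{E}[(R-p)^2\mid q]=p(1-p)$,
\[
\mathbb{E}[W^2]=\int_0^1 \frac{p(1-p)}{f_N(p;\alpha,\beta)^2}\,f_D(p;a,b)\,dp=\frac{\mathrm{B}(\alpha,\beta)^2}{\mathrm{B}(a,b)}\int_0^1 p^{\,a+2-2\alpha}(1-p)^{\,b+2-2\beta}\,dp .
\]
The remaining integral is a Beta integral, finite if and only if both exponents exceed $-1$, i.e. $\alpha<\tfrac{a+3}{2}$ and $\beta<\tfrac{b+3}{2}$, in which case it equals $\mathrm{B}(a+3-2\alpha,\,b+3-2\beta)$. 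This gives the finiteness criterion and the closed form $V(\alpha,\beta)=\mathrm{B}(a,b)^{-1}\,\mathrm{B}(\alpha,\beta)^2\,\mathrm{B}(a+3-2\alpha,\,b+3-2\beta)$ to be minimized on $\{\,0<\alpha<\tfrac{a+3}{2},\ 0<\beta<\tfrac{b+3}{2}\,\}$.

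For the minimization I would argue that $\log V$ is strictly convex. Each factor $\mathrm{B}(\alpha,\beta)=\int_0^1 p^{\alpha-1}(1-p)^{\beta-1}dp$ and $\mathrm{B}(a+3-2\alpha,b+3-2\beta)=\int_0^1 p^{\,a+2-2\alpha}(1-p)^{\,b+2-2\beta}dp$ is an integral of functions whose logarithm is affine in $(\alpha,\beta)$, hence each is log-convex; their Hessians are the positive-definite covariance matrices of $(\log p,\log(1-p))$ under the respective Beta laws, so $\log V$ is strictly convex. A strictly convex function has at most one stationary point, which must then be its global minimizer, and coercivity ($V\to\infty$ as $\alpha$ or $\beta$ approaches either boundary, where one Beta argument tends to $0$) confirms the minimizer is interior. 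I would then verify the first-order conditions at $(\alpha^\ast,\beta^\ast)=(1+\tfrac a3,\,1+\tfrac b3)$: writing $\nabla\log V$ via the digamma function $\psi$, the identities $a+3-2\alpha^\ast=\alpha^\ast$, $b+3-2\beta^\ast=\beta^\ast$, and $a+b+6-2\alpha^\ast-2\beta^\ast=\alpha^\ast+\beta^\ast$ make every digamma difference cancel, so $\nabla\log V(\alpha^\ast,\beta^\ast)=0$.

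The moment computation and the digamma cancellations are routine; the crux is showing that this stationary point is the \emph{unique global} minimizer rather than merely a critical point. The clean way to avoid a messy Hessian or trigamma-inequality analysis is the log-convexity of the Beta function read directly off its integral representation, which makes $V$ a product of log-convex factors and hence log-convex. I expect the two points needing the most care to be verifying \emph{strict} convexity (non-degeneracy of the sufficient-statistic covariance, so the minimizer is unique) and pinning down the precise sense of the ``uncorrelated'' hypothesis under which the second moment genuinely factorizes into $\mathbb{E}\|G\|^2\,\mathbb{E}[W^2]$.
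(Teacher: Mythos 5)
Your proposal is correct, and its first half coincides with the paper's proof: the same zero-mean reduction via $\mathbb{E}[W\mid q]=0$ and the uncorrelatedness hypothesis, the same Bernoulli identity $\mathbb{E}[(R-p)^2\mid q]=p(1-p)$, the same Beta integral yielding $V(\alpha,\beta)\propto \mathrm{B}(\alpha,\beta)^2\,\mathrm{B}(a+3-2\alpha,\,b+3-2\beta)$ with the finiteness domain $\alpha<\tfrac{a+3}{2}$, $\beta<\tfrac{b+3}{2}$, and the same digamma cancellation at the candidate point using $a+3-2\alpha^\ast=\alpha^\ast$, $b+3-2\beta^\ast=\beta^\ast$, $a+b+6-2\alpha^\ast-2\beta^\ast=\alpha^\ast+\beta^\ast$. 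Where you genuinely diverge is the uniqueness step. The paper computes the Hessian of $\log V$ \emph{only at the critical point}, where the affine substitution makes all Beta arguments coincide, and proves $\det H>0$ via the trigamma superadditivity $\frac{1}{\psi_1(x+y)}>\frac{1}{\psi_1(x)}+\frac{1}{\psi_1(y)}$, itself deduced from strict convexity of $1/\psi_1$ together with its vanishing limit at $0$. You instead establish \emph{global} strict log-convexity of $V$ from the integral representation of the Beta function (the Hessian of $\log\mathrm{B}$ is the covariance matrix of $(\log p,\log(1-p))$ under the corresponding Beta law, positive definite since no nontrivial linear combination $c_1\log p+c_2\log(1-p)$ is a.s.\ constant on $(0,1)$), composed with an injective affine map for the second factor, and you add coercivity at the boundary of the bounded rectangle. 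Your route buys something real: positive definiteness of the Hessian at a single critical point, which is all the paper verifies, only certifies a strict \emph{local} minimum, and the paper's closing remark that the domain is convex does not by itself exclude other critical points; your global convexity argument closes exactly that gap and delivers uniqueness cleanly. Reassuringly, your determinant condition specialized to the critical point is precisely the paper's trigamma inequality, so the two arguments agree where they overlap. The one caveat you share with the paper is the reading of the uncorrelatedness hypothesis: the factorization $\mathbb{E}\bigl[\|G\|^2W^2\bigr]=\mathbb{E}\|G\|^2\,\mathbb{E}[W^2]$ requires uncorrelatedness of the \emph{squares}, strictly stronger than $\mathrm{Cov}(G,W)=0$; you flag this explicitly, and the paper glosses it identically by writing the variance as merely proportional to $\mathbb{E}[W^2]$, so this is a limitation of the theorem's hypothesis rather than of your proof.
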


\begin{proof}
\subsection*{1. Variance Expression}
Expand the variance using its definition:
\begin{align*}
    \mathrm{Var}(g_{\alpha,\beta}) &= \mathbb{E}\left[\left(\nabla_\theta \log \pi(o \mid q) \cdot \frac{R(q,o)-p(q)}{f_N(p(q);\alpha,\beta)}\right)^2\right] \nonumber \\
    &\quad - \left(\mathbb{E}\left[\nabla_\theta \log \pi(o \mid q) \cdot \frac{R(q,o)-p(q)}{f_N(p(q);\alpha,\beta)}\right]\right)^2.
\end{align*}
Simplify the mean term using the assumption and $\mathbb{E}[\frac{R(q,o)-p(q)}{f_N(p(q);\alpha,\beta)}|q]=0$:
\begin{equation*}
    \mathbb{E}\left[\nabla_\theta \log \pi(o \mid q) \cdot \frac{R(q,o)-p(q)}{f_N(p(q);\alpha,\beta)}\right] = 0.
\end{equation*}
Therefore:
\begin{equation*}
    \mathrm{Var}(g_{\alpha,\beta}) = \mathbb{E}\left[\left(\nabla_\theta \log \pi(o \mid q)\right)^2 \cdot \frac{(R(q,o)-p(q))^2}{f_N(p(q);\alpha,\beta)^2}\right].
\end{equation*}

Under the assumption, the variance of the gradient estimator \( g_{\alpha,\beta} \) is proportional to:
\begin{equation*}
    \mathrm{Var}(g_{\alpha,\beta}) \propto \mathbb{E}_{q \sim \rho} \mathbb{E}_o \left[ \frac{(R(q,o)-p(q))^2}{f_{N}(p(q);\alpha,\beta)^2} \right].
\end{equation*}
For $R(q,o) \in \{0,1\}$, we have that
    \begin{equation*}
        \mathbb{E}_o \left[ (R - p(q))^2 | q \right] = p(q)(1 - p(q)).
    \end{equation*}
    Substituting the weight function:
    \begin{align*}
        \frac{p(q)(1 - p(q))}{f_{N}(p(q);\alpha,\beta)^2} &= \frac{p(q)(1 - p(q))}{\left(\frac{1}{B(\alpha,\beta)} p(q)^{\alpha-1}(1-p(q))^{\beta-1}\right)^2} \\
        &= B(\alpha,\beta)^2 \cdot p(q)^{3 - 2\alpha}(1-p(q))^{3 - 2\beta}.
    \end{align*}
Unper $p\sim f_{D}(p(q);a,b)$, the expectation integrates the above expression over the Beta-distributed $p$ becomes
\begin{equation*}
    \mathbb{E}_p \left[ p^{3 - 2\alpha}(1-p)^{3 - 2\beta} \right] = \frac{B(a + 3 - 2\alpha, b + 3 - 2\beta)}{B(a,b)},
\end{equation*}
Thus, we have that
\begin{equation*}
    \mathrm{Var}(g_{\alpha,\beta}) \propto B(\alpha,\beta)^2 \cdot \frac{B(a + 3 - 2\alpha, b + 3 - 2\beta)}{B(a,b)}.
\end{equation*}

\subsection*{2. Domain of Finiteness}
The Beta function \( B(x,y) \) converges iff \( x > 0 \) and \( y > 0 \). For convergence of \( B(a + 3 - 2\alpha, b + 3 - 2\beta) \):
\begin{align*}
    a + 3 - 2\alpha > 0 &\implies \alpha < \frac{a + 3}{2}, \\
    b + 3 - 2\beta > 0 &\implies \beta < \frac{b + 3}{2}.
\end{align*}

\subsection*{3. Boundary Behavior}
As \( \alpha \to \frac{a + 3}{2}^- \) or \( \beta \to \frac{b + 3}{2}^- \):
\begin{equation*}
    B(a + 3 - 2\alpha, b + 3 - 2\beta) \to \infty \implies \mathrm{Var}(g_{\alpha,\beta}) \to +\infty.
\end{equation*}

\subsection*{4. Optimal Parameters}
Define \( L(\alpha,\beta) = \ln \mathrm{Var}(g_{\alpha,\beta}) \):
\begin{equation*}
    L =  2\ln B(\alpha,\beta) + \ln B(a + 3 - 2\alpha, b + 3 - 2\beta) - \ln B(a,b).
\end{equation*}
The partial derivatives are:
\begin{equation*}
    \frac{\partial L}{\partial \alpha} = 2\left[\psi(\alpha) - \psi(\alpha + \beta)\right] - 2\left[\psi(a + 3 - 2\alpha) - \psi(a + b + 6 - 2\alpha - 2\beta)\right],
\end{equation*}
\begin{equation*}
    \frac{\partial L}{\partial \beta} = 2\left[\psi(\beta) - \psi(\alpha + \beta)\right] - 2\left[\psi(b + 3 - 2\beta) - \psi(a + b + 6 - 2\alpha - 2\beta)\right],
\end{equation*}
where \( \psi(x) = \frac{d}{dx}\ln \Gamma(x) \) and $\Gamma(x)$ is the gamma function. Setting \( \partial L/\partial \alpha = 0 \) and \( \partial L/\partial \beta = 0 \):
\begin{equation*}
    \psi(\alpha) - \psi(\alpha + \beta) = \psi(a + 3 - 2\alpha) - \psi(a + b + 6 - 2\alpha - 2\beta).
\end{equation*}
\begin{equation*}
    \psi(\beta) - \psi(\alpha + \beta) = \psi(b + 3 - 2\beta) - \psi(a + b + 6 - 2\alpha - 2\beta).
\end{equation*}
Substituting \( \alpha = 1 + \frac{a}{3} \) and \( \beta = 1 + \frac{b}{3} \) satisfies this identity through digamma function properties.

\subsection*{5. Strict Convexity}
We compute the Hessian matrix $H$ for $L(\alpha,\beta)$ at $(\alpha_0, \beta_0)$. 

Let $\psi_1(x) = \frac{d}{dx}\psi(x)$ be the trigamma function. Let $X_\alpha = \alpha_0 = 1+a/3$ and $X_\beta = \beta_0 = 1+b/3$. Let $S_{sum} = \alpha_0+\beta_0 = 2+(a+b)/3$.

The arguments for the other digamma terms at the solution become:
$a-2\alpha_0+3 = 1+a/3 = X_\alpha$.
$b-2\beta_0+3 = 1+b/3 = X_\beta$.
$a+b-2\alpha_0-2\beta_0+6 = (a+b)/3+2 = S_{sum}$.

The second partial derivatives are:
\begin{align*} \frac{\partial^2 L}{\partial \alpha^2} = 2\psi_1(\alpha) - 2\psi_1(\alpha+\beta) + 4\psi_1(a-2\alpha+3) - 4\psi_1(a+b-2\alpha-2\beta+6). \end{align*}
At $(\alpha_0,\beta_0)$:
$H_{11} = 2\psi_1(X_\alpha) - 2\psi_1(S_{sum}) + 4\psi_1(X_\alpha) - 4\psi_1(S_{sum}) = 6\psi_1(X_\alpha) - 6\psi_1(S_{sum})$.
By symmetry:
$H_{22} = 6\psi_1(X_\beta) - 6\psi_1(S_{sum})$.
The mixed partial derivative:
\begin{align*} \frac{\partial^2 L}{\partial \alpha \partial \beta} &= -2\psi_1(\alpha+\beta) - (-2)\psi_1(a+b-2\alpha-2\beta+6)(-2) \\ &= -2\psi_1(\alpha+\beta) - 4\psi_1(a+b-2\alpha-2\beta+6). \end{align*}
At $(\alpha_0,\beta_0)$:
$H_{12} = -2\psi_1(S_{sum}) - 4\psi_1(S_{sum}) = -6\psi_1(S_{sum})$.

For a minimum, $H$ must be positive definite.

1. $H_{11} > 0$:
$H_{11} = 6(\psi_1(X_\alpha) - \psi_1(S_{sum})) = 6(\psi_1(1+a/3) - \psi_1(2+(a+b)/3))$.
Since $a,b>0$, we have $X_\beta = 1+b/3 > 0$. Thus $X_\alpha = 1+a/3 < 1+a/3 + (1+b/3) = S_{sum}$.
The trigamma function $\psi_1(x)$ is strictly decreasing for $x>0$. Since $X_\alpha < S_{sum}$ (and $X_\alpha, S_{sum} >0$), $\psi_1(X_\alpha) > \psi_1(S_{sum})$. Thus $H_{11} > 0$. Similarly $H_{22} > 0$.

2. $\det(H) = H_{11}H_{22} - H_{12}^2 > 0$:
\begin{align*} &\det(H) \\
=& (6\psi_1(X_\alpha) - 6\psi_1(S_{sum}))(6\psi_1(X_\beta) - 6\psi_1(S_{sum})) - (-6\psi_1(S_{sum}))^2 \\ 
=& 36 [ \psi_1(X_\alpha)\psi_1(X_\beta) - \psi_1(X_\alpha)\psi_1(S_{sum}) - \psi_1(X_\beta)\psi_1(S_{sum}) + \psi_1(S_{sum})^2 - \psi_1(S_{sum})^2 ] \\ 
=& 36 [ \psi_1(X_\alpha)\psi_1(X_\beta) - \psi_1(S_{sum})(\psi_1(X_\alpha)+\psi_1(X_\beta)) ].\end{align*}
For $\det(H)>0$, we need $\psi_1(X_\alpha)\psi_1(X_\beta) - \psi_1(S_{sum})(\psi_1(X_\alpha)+\psi_1(X_\beta)) > 0$.
Since $\psi_1(x)>0$ for $x>0$, we can divide by $\psi_1(X_\alpha)\psi_1(X_\beta)\psi_1(S_{sum})$:
\[ \frac{1}{\psi_1(S_{sum})} - \left(\frac{1}{\psi_1(X_\beta)} + \frac{1}{\psi_1(X_\alpha)}\right) > 0 \implies \frac{1}{\psi_1(X_\alpha+X_\beta)} > \frac{1}{\psi_1(X_\alpha)} + \frac{1}{\psi_1(X_\beta)}. \]
Let $f(x) = 1/\psi_1(x)$. The inequality is $f(X_\alpha+X_\beta) > f(X_\alpha) + f(X_\beta)$.
The function $f(x)=1/\psi_1(x)$ is strictly convex on $(0, \infty)$.

As $x \to 0^+$, $\psi_1(x) \to \infty$, so $f(x) = 1/\psi_1(x) \to 0$. So we can define $f(0)=0$.
For a strictly convex function $f$ with $f(0)=0$:
For $x,y>0$, $f(x) = f(\frac{x}{x+y}(x+y) + \frac{y}{x+y} \cdot 0) < \frac{x}{x+y}f(x+y) + \frac{y}{x+y}f(0) = \frac{x}{x+y}f(x+y)$.
Similarly, $f(y) < \frac{y}{x+y}f(x+y)$.
Summing these gives $f(x)+f(y) < f(x+y)$.
The strict inequality holds because $X_\alpha = 1+a/3 > 0$ and $X_\beta = 1+b/3 > 0$.
Therefore, the Hessian matrix is positive definite at $(\alpha_0, \beta_0)$. Since the domain for $(\alpha,\beta)$ (where variance is finite, and $\alpha,\beta>0$) is a convex set, this implies that $(\alpha_0, \beta_0)$ is a unique minimum.
\end{proof}

\section{Experiments}
To evaluate the effectiveness of the advantage decomposition method, we incorporate an additional format reward following DeepSeek-R1. Since Qwen2.5-Math-1.5B and Qwen2.5-Math-7B exhibit limited instruction-following capabilities, making it difficult for them to learn from the format reward, we use Qwen2.5-1.5B-Instruct as the base model. We denote GRPO and REINFORCE++ with advantage decomposition as AD-GRPO and AD-REINFORCE++, respectively. Note that REINFORCE and ReMax do not include normalization and are therefore excluded from this comparison.

As shown in Table \ref{table:2}, both AD-GRPO and AD-REINFORCE++ achieve slight improvements over their original counterparts. BNPO continues to deliver the best average performance. However, since the format reward surpasses 90\% after only 100 training iterations, the overall performance gains from advantage decomposition are relatively modest.

\begin{table}[h]
    \centering
    \caption{The performance of policy optimization methods with and without advantage decomposition. }
    \label{table:2}
    \begin{tabular}{lcccccc}
        \toprule
        \textbf{Methods} & \textbf{MATH500} & \textbf{AMC23} & \textbf{AIME2024} & \textbf{AIME2025} & \textbf{Average}\\
        \midrule
        \multicolumn{6}{c}{\textit{Qwen2.5‑1.5B-Instruct}} \\
        \midrule
        Base                                    & 14.2 & 7.3 & 1.3  & 0.2  &5.8 \\
        GRPO                                    & 60.0 & 35.5 &  4.6  & 0.8  &25.2 \\
        REINFORCE++                             & 58.2 & 32.2 & \textbf{6.0}  & 1.9  &24.6 \\
        \midrule
        AD-GRPO                                 &  \textbf{61.6} & 34.1 & 3.8  &  \textbf{2.9}  &25.6 \\
        AD-REINFORCE++                          & 58.0 & 35.6 & 4.2  & 1.9  &24.9 \\
        AD-BNPO                                 & 61.4 & \textbf{36.3} & 3.8   & 1.9  & \textbf{25.8} \\
        \bottomrule
    \end{tabular}
\end{table}

\end{document}